\documentclass{article} 
\usepackage{iclr2027_conference,times}
\usepackage[hidelinks]{hyperref}
\usepackage{url}
\usepackage{booktabs}
\usepackage{amsmath,amssymb,amsthm}
\usepackage{graphicx}
\newtheorem{theorem}{Theorem}

\newtheorem{remark}{Remark}
\newcommand{\Dhat}{\widehat{\Delta}}
\newcommand{\LCB}{\underline{\Delta}}

\iclrfinalcopy

\title{RE-0: Verified Recursive Improvement of \mbox{Embodied} Code-as-Policy Agents\\through Local On-Policy Distillation}

\author{Jiawei Zhang$^{1}$, Xiangrong Zhang$^{1}$, Rui Song$^{2}$,\\ \bf Huanbin Zhou$^{1}$, Chengye Song$^{2}$, Hongzhou Wang$^{1\dagger}$\\ \rm$^{1}$Jilin University, Changchun, China \quad $^{2}$Dalian University of Technology, Dalian, China
}

\begin{document}

\maketitle
{\let\thefootnote\relax\footnotetext{\normalsize$^{\dagger}$Corresponding author: \texttt{wanghongzhou@jlu.edu.cn}}}

\begin{abstract}
Code-as-Policy agents accomplish long-horizon embodied tasks by generating and executing code, yet continually improving them with teachers that are stronger but not globally reliable remains a key challenge. Existing distillation methods typically treat the teacher's complete behavior as the supervision target and thus misassign training credit on states where the teacher fails. We propose \textbf{RE-0}, a recursively verified policy improvement framework: rather than assuming that the teacher globally outperforms the student, RE-0 requests local corrections from the teacher on the student's own failure histories and checks in the environment whether each correction is genuinely beneficial; verified corrections yield immediate improvement. Building on this, we propose \textbf{RE-OPD}, which turns verified interventions into supervision for on-policy distillation. Only counterfactually verified teacher interventions provide distribution-level supervision, weighted by their measured local benefit, and the improvement they induce is projected back into the standalone student---so both where supervision is applied and how much credit the teacher receives co-evolve with the student policy. We further prove that the student's per-round gain is lower-bounded by its verified intervention gain up to verification and projection error terms. Experiments across multiple Code-as-Policy embodied tasks show that RE-0 improves both teacher-assisted execution and the standalone student, and generalizes to novel robots and scenes.
\end{abstract}

\section{Introduction}
\label{sec:intro}
Code-as-Policy agents turn language models into policies that interact with an environment through executable programs---code as the interface for compositional control, tool calling, and long-horizon reasoning. Policy errors are then strongly history-dependent: one harmful segment not only fails the current action but alters the environment state, the program namespace, and the subsequent observable history. Improving such an agent therefore poses a question more basic than \emph{what code should be generated}: \emph{where, along the execution history actually induced by the student, should corrective credit be assigned}.

Existing approaches improve code policies through expert-trajectory supervision, outcome-level reinforcement learning, or teacher distillation, but none answers whether one concrete teacher intervention on a real student failure produced a positive causal contribution. Even a strong teacher accumulates errors through perception ambiguity or earlier executions, while a teacher unable to finish the task alone may still decide better at a local failure boundary the student has reached; neither global success rate nor prefix-level reliability determines whether a teacher action deserves training credit.

We therefore reformulate teacher supervision as \emph{local counterfactual credit on student-induced histories}: on a history $h$ genuinely visited by the current student, does a bounded teacher correction beat the student's action, and does the advantage persist after control returns to the same student? Three coupled challenges arise: localizing the \emph{earliest} plausibly causal harmful boundary; comparing correction and student from the \emph{identical} environment state; and absorbing verified assistance into the student so deployment needs no teacher. We call this \textbf{verifier-mediated local teacher credit assignment}.

Our framework, \textbf{RE-0} (Figure~\ref{fig:mechanism}), solves it recursively. Given a failed trajectory of the current student $\pi_k$, a diagnostic agent localizes the earliest candidate harmful boundary and constructs repair contracts from the smallest scope; the teacher emits one bounded code macro-action within the contract, never taking over the remaining trajectory. RE-0 restores the environment to the snapshot saved at that boundary, executes the student baseline and the teacher correction under shared randomness, and admits the intervention only if the lower confidence bound of the paired advantage is positive; control then returns to the student, yielding a verified local assistant $\mu_k$ that intervenes only where the teacher has positive local credit. To convert this transient advantage into standalone capability, \textbf{RE-OPD} distills on the histories actually visited by $\mu_k$: the current student is the retention term, the contract-constrained teacher the local target, and the distillation weight is the verified counterfactual advantage; the next student then regenerates trajectories and supervision locations are re-localized on the new occupancy, giving the recursion $\pi_k\!\to\!\mu_k\!\to\!\pi_{k+1}$ whose per-round gain is lower-bounded by the verified intervention gain up to verification and projection error, with no global teacher dominance required. Our ablations locate the load-bearing component precisely: it is the verified admission of patches (the corpus), not the specific weighting---uniform SFT on the verified corpus is within trainer-seed variance of the weighted objective, while unverified corpora fail (composition and selection analyses in \S\ref{sec:exp}). On six long-horizon tasks and a physical robot platform, RE-0 lifts the base from 4--68\% to 62--100\% with 3--67 verified rows (details in \S\ref{sec:exp}).

Our contributions are: (1) verifier-mediated local teacher credit on student-induced histories; (2) \textbf{RE-OPD}, which recursively projects verified local improvement into a teacher-free student with a finite-sample improvement guarantee; and (3) empirical evidence across six long-horizon tasks and physical deployment that verified corpus selection---not weighting alone---is the load-bearing mechanism.

\section{Related Work}
\label{sec:related}

\paragraph{Code-as-Policy and embodied code agents.}
Large language models have been used to map natural-language tasks into embodied plans, callable skills, and executable robot programs: SayCan constrains planning with environment affordances, Inner Monologue folds environment feedback into closed-loop planning, and Code as Policies and ProgPrompt exploit code structure, control APIs, and program context to generate embodied policies \citep{ahn2022saycan,huang2022innermonologue,liang2023codeaspolicies,singh2023progprompt}. Recent work repairs code policies from vision and execution traces, or searches and evolves high-return programs with rollout evaluators \citep{lin2025embodiedcoder,sygkounas2026memento}. These methods focus on inference-time planning, program repair, or candidate search; RE-0 asks how corrections discovered \emph{during execution} become verifiable teacher credit amortized into a teacher-free student.

\paragraph{On-policy distillation and interactive imitation.}
Interactive imitation queries an expert on learner-induced states to mitigate covariate shift, with DAgger as the classic form \citep{ross2011dagger}; in autoregressive models, GKD distills on student-generated sequences and DistiLLM lowers its cost via adaptive off-policy reuse \citep{agarwal2024gkd,ko2024distillm}. For multi-turn agents, prefixes near the student occupancy can themselves make the teacher target unreliable, motivating reliability-aware replay approximations of OPD. RE-0 shares the goal of learning on the student distribution but never presumes teacher reliability on queried histories: it estimates the teacher's local counterfactual advantage against student behavior from the same simulation snapshot and projects only environment-verified positive-credit interventions.

\paragraph{Agentic correction, program repair, and verifiers.}
Self-Refine, Reflexion, and CRITIC revise generations with self-feedback, verbal memory, or external tools \citep{madaan2023selfrefine,shinn2023reflexion,gou2024critic}; in code generation, CodeRL scores programs from compilation and unit tests \citep{le2022coderl}. Such methods typically regenerate whole answers, select among candidates at the outcome level, or feed execution traces back as prompts. RE-0 instead localizes the \emph{earliest} harmful code boundary in the student's trajectory, compares the student baseline and the teacher correction from the \emph{identical} environment snapshot, and confines teacher control to a minimal verified macro-action after which the student regenerates the suffix; the verifier thus decides not only whether a candidate succeeds but whether the teacher's distribution earns local training credit.

\paragraph{Selective supervision and local credit assignment.}
Prior work selectively queries experts or weights demonstrations by uncertainty or quality \citep{zhang2017safedagger,kelly2019hgdagger,zhang2021confidenceaware}. RE-0 instead defines the credit object itself: the teacher's conditional behavior on a specific student history, evaluated by paired same-state counterfactual advantage.
\section{RE-0: Recursive Verified Policy Improvement}
\begin{figure}[t]
\centering
\includegraphics[width=\linewidth]{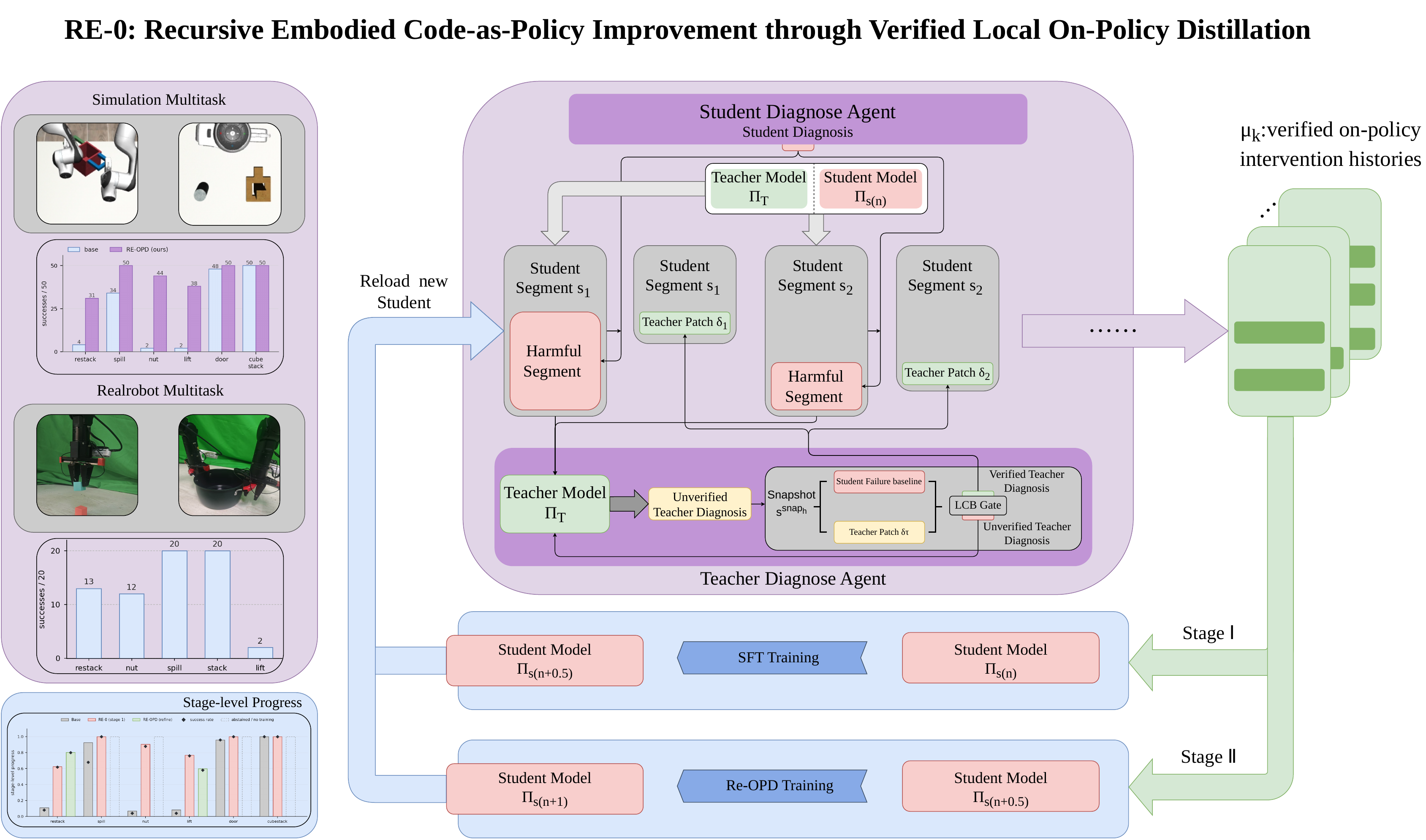}
\caption{\textbf{RE-0 overview.} RE-0 localizes harmful segments in the current student's trajectories, requests bounded teacher repairs, and verifies them by paired counterfactual rollout from the same snapshot (LCB gate). Verified interventions $\mu_n$ are distilled into the standalone student via cold-start SFT (Stage~I) and RE-OPD refinement (Stage~II); the student then regenerates trajectories, closing the loop. \textbf{Left:} simulation, real-robot, and stage-level results.}
\label{fig:mechanism}
\end{figure}

\label{sec:method}
\paragraph{Learning from a locally better but globally imperfect teacher.}
Standard on-policy distillation matches the teacher's conditional distribution on histories visited by the student. RE-0 assumes no global policy dominance between student and teacher: the teacher holds positive \emph{local} counterfactual advantage only on some student-induced histories, and supervision credit is assigned not to teacher trajectories as wholes but to the marginal improvement an intervention produces over the student's own behavior (Eq.~\ref{eq:delta-h}). The correction $\delta$ takes over only a minimal executable segment and immediately returns control to the current student $\pi_k$. A diagnostic agent localizes failure boundaries that may carry positive credit, an auditable counterfactual executor verifies the credit is real, and RE-OPD amortizes the verified improvement into the next independent student---verifier-mediated local credit assignment.

\subsection{Local Teacher Credit on Student-Induced Histories}
\label{sec:credit}
With $J(\pi)$ the expected discounted return, the policy-visible history at the $t$-th code decision point is $h_t=(o_0,c_0,\ldots,o_t)$, bounded code segments acting as macro-actions, $d_{\pi_k}^{t}$ the step-$t$ history distribution of the generation-$k$ student, and $s_t^{\mathrm{snap}}$ the verifier's policy-invisible snapshot. RE-0 asks whether the teacher decides better \emph{locally} on histories actually induced by the current student: for the teacher's local correction policy $\tau_k(\delta\mid x,h_t)$, whose corrections return control to $\pi_k$ after executing, the expected local credit is (formal definitions in Appendix~\ref{app:proofs})
\begin{equation}
\Delta_k(h_t)=\mathbb{E}_{\delta\sim\tau_k(\cdot\mid x,h_t)}\left[A_t^{\pi_k}(h_t,\delta)\right].
\label{eq:delta-h}
\end{equation}
Equation~\ref{eq:delta-h} captures the counterfactual marginal contribution of teacher intervention relative to the current student. If $\Delta_k(h_t)>0$, the teacher's local policy is better than continuing the student at $h_t$; if $\Delta_k(h_t)\le 0$, the teacher's supervision on that history must not be accepted automatically, regardless of its overall strength.

The ideal allocation---supervision only on $\mathcal{H}_k^{+}=\{(t,h_t):\Delta_k(h_t)>0\}$ under budget $B$, a fractional-knapsack LP (appendix)---is not directly observable: $\Delta_k(h)$ is hidden and the history space forbids exhaustive verification. RE-0 therefore uses diagnostics to localize executable failure boundaries that may carry positive local credit, then decides by counterfactual execution from identical states.

\begin{figure}[t]
\centering
\includegraphics[width=\linewidth]{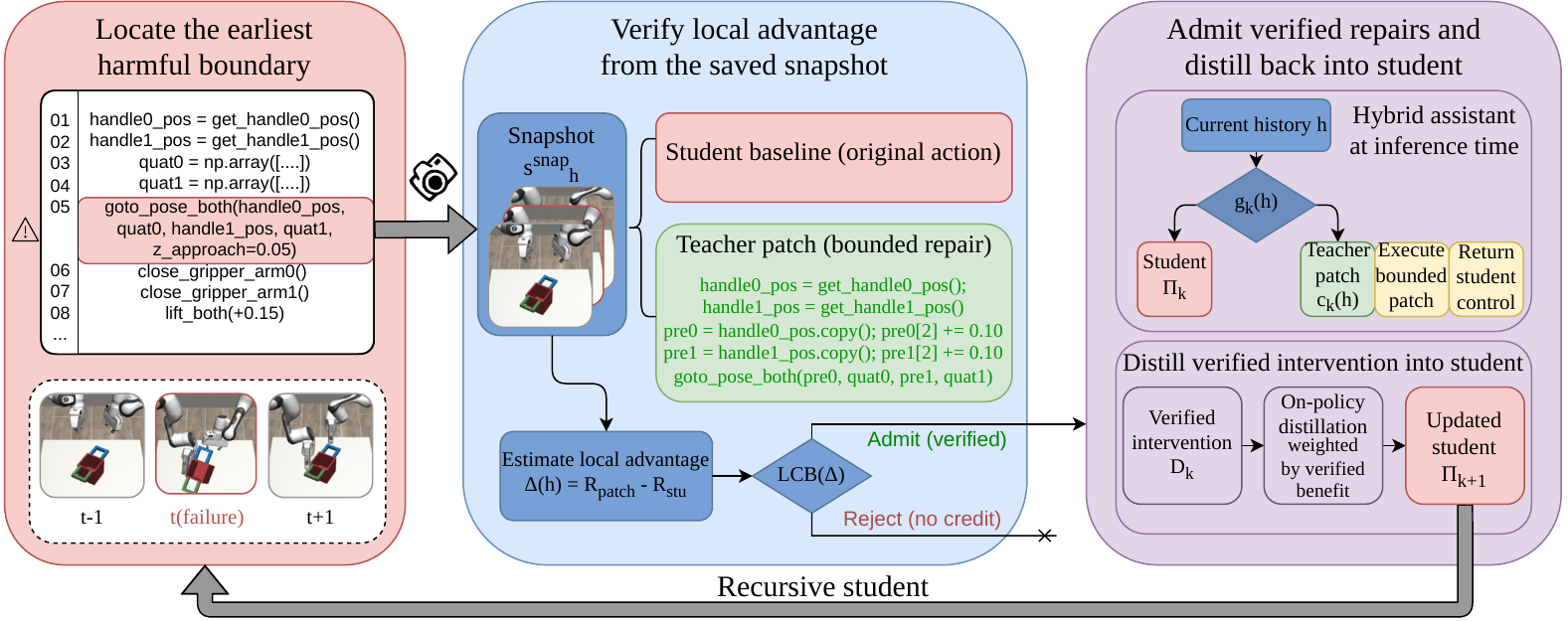}
\caption{\textbf{One recursion on a failing two-arm lift program.} \textbf{Locate:} the earliest harmful boundary $t$(failure). \textbf{Verify:} from the same snapshot, student baseline vs.\ teacher patch under shared randomness; admitted iff $\LCB>0$. \textbf{Distill:} verified repairs are credit-weighted into the standalone student; deployment uses no teacher calls.}
\label{fig:pipeline}
\end{figure}
\subsection{Verified Local Assistance}
\label{sec:assistance}
Equation~\ref{eq:delta-h} defines the teacher's true local credit, which cannot be observed directly. RE-0 approximates this ideal allocation with a two-stage mechanism: a diagnostic agent localizes the executable failure boundary that may carry positive local credit, and an environment verifier decides through same-state counterfactual execution whether the intervention yields a real gain. The former shrinks the candidate space; the latter decides whether teacher supervision enters the hybrid policy.

Given a student-executed trajectory $\zeta$, the student diagnostic agent outputs the earliest candidate failure history with structured evidence, $(\widehat{h},z)$: $z$ collects runtime exceptions, unrealized expected effects, or other structured failure evidence, and the execution system saves the full simulation snapshot $s_{\widehat{h}}^{\mathrm{snap}}$; if no executable, causally consistent boundary is found, no teacher call fires. A teacher diagnostic then judges whether the failure is locally repairable and emits repair contracts of monotonically increasing scope cost $\kappa$ (formally, $(\mathfrak{X}(\widehat{h},z),\,g_k^{\mathrm{diag}}(\widehat{h}))=\mathcal{A}_k^{T}(x,\widehat{h},z)$; definitions in the appendix). Each contract constrains the repair boundary, the allowed program symbols and APIs, and the maximum segment length. RE-0 verifies contracts in increasing $\kappa(\chi)$ and selects the smallest \emph{verified} repair scope---\emph{minimal necessary repair} means the smallest verified scope, not the shortest candidate string (selection rule and validity predicate in the appendix). If $g_k^{\mathrm{diag}}(\widehat{h})=0$ or no contract qualifies, no intervention fires; we write the selected contract teacher distribution as $\bar{\tau}_k^{\chi_k^{\star}}$.

For each teacher candidate, RE-0 restores the environment to $s_{\widehat{h}}^{\mathrm{snap}}$. Under shared subsequent randomness $\xi_1,\ldots,\xi_M$, the teacher correction and a student resample $c_m^{S}\sim\pi_k(\cdot\mid x,\widehat{h})$ are executed and control returns to the same student $\pi_k$ after the segment completes; the paired counterfactual advantage estimate $\widehat{A}_k(\widehat{h},\delta)$ is the $M$-average of the return difference $(\delta\!\to\!\pi_k)$ minus $(c_m^{S}\!\to\!\pi_k)$. RE-0 verifies the \emph{expected} credit of the contract teacher distribution: averaging $K$ candidates $\delta_j\overset{\mathrm{i.i.d.}}{\sim}\bar{\tau}_k(\cdot\mid x,\widehat{h})$ with a two-layer confidence radius controlling both teacher-candidate and rollout sampling error,
\begin{equation}
\Dhat_k(\widehat{h})=\tfrac{1}{K}\textstyle\sum_{j}\widehat{A}_k(\widehat{h},\delta_j),\qquad
\LCB_k(\widehat{h})=\Dhat_k(\widehat{h})-b_k(\widehat{h};K,M,\alpha).
\label{eq:lcb}
\end{equation} When verifying multiple histories we allocate $\alpha_{k,h}$ with $\sum_{k,h}\alpha_{k,h}\le\alpha$ for simultaneous coverage over the whole recursion; concentration bounds appear in the appendix. The environment gate is $g_k^{\mathrm{env}}(\widehat{h})=\mathbf{1}[\LCB_k(\widehat{h})>\epsilon]\cdot\mathbf{1}[\operatorname{Valid}(\chi_k^{\star},\bar{\tau}_k,s_{\widehat{h}}^{\mathrm{snap}})]$, with $\epsilon\ge0$ the minimum advantage threshold and $\operatorname{Valid}$ the execution-safety, program-context, and snapshot-identity predicate (appendix); candidate returns never post-select a single best candidate.

Combining the budget allocation $q_k$ (appendix), the final intervention probability is $g_k(h)=q_k(h)\,g_k^{\mathrm{diag}}(h)\,g_k^{\mathrm{env}}(h)$, giving the verified local hybrid policy
\begin{equation}
\mu_k(c\mid x,h)=\big(1-g_k(h)\big)\,\pi_k(c\mid x,h)+g_k(h)\,\bar{\tau}_k(c\mid x,h).
\label{eq:hybrid}
\end{equation}
From here on, the teacher policy in Eq.~\ref{eq:delta-h} is taken to be the same contract distribution $\bar{\tau}_k$. The teacher takes over only the current bounded repair segment; after it executes, the student regenerates subsequent code from the new observation rather than executing a suffix generated by the old policy.

\textbf{Remark (rewind semantics).} The pipeline runs two passes---a live attempt that localizes candidate boundaries, then verification from the stored snapshot---but both arms of every paired comparison start from the \emph{same} snapshot, so the first pass is a common data-collection cost that cancels in Eq.~\ref{eq:mu-bound}; $\mu_k$ is equivalently a one-pass policy of the \emph{augmented} process whose state includes the snapshot stack and the first-pass diagnostic evidence, and the bound applies to the post-rewind continuation return of that process, not to a snapshot-free online policy (appendix).

By the history--state performance-difference lemma, $J(\mu_k)-J(\pi_k)=\sum_t\gamma^t\mathbb{E}_{h\sim d_{\mu_k}^t}[g_k(h)\Delta_k(h)]$. Define the verifiable intervention gain $\widehat{\mathcal{G}}_k=\sum_t\gamma^t\mathbb{E}_{h\sim d_{\mu_k}^t}[g_k(h)[\Dhat_k(h)]_+]$ and the verification error budget $\mathcal{E}_{\mathrm{verify},k}=\sum_t\gamma^t\mathbb{E}_{h\sim d_{\mu_k}^t}[g_k(h)b_k(h)]+\mathcal{E}_{\mathrm{sys},k}$, where $\mathcal{E}_{\mathrm{sys},k}$ collects system approximation errors such as snapshot restoration and execution identity. Whenever returns are bounded, contract supports are valid, and the simultaneous-coverage condition of Eq.~\ref{eq:lcb} holds, with probability at least $1-\alpha$,
\begin{equation}
J(\mu_k)-J(\pi_k)\;\ge\;\widehat{\mathcal{G}}_k-\mathcal{E}_{\mathrm{verify},k}.
\label{eq:mu-bound}
\end{equation}
Thus RE-0 converts verified positive local credit into expected performance gain of the hybrid policy without requiring global teacher dominance.

\subsection{RE-OPD: Amortizing Local Credit into the Student}
\label{sec:reopd}
The verified local hybrid $\mu_k$ obtains higher expected return through local teacher interventions, but executing it still requires teacher calls and environment verification. RE-OPD amortizes these transient local advantages into student parameters, so that $\pi_{k+1}$ reproduces the improved behavior without teacher access.

Recall the hybrid of Eq.~\ref{eq:hybrid}. The ideal projection finds the independent student closest to $\mu_k$ on the histories $\mu_k$ actually visits, $\pi_{k+1}=\arg\min_{\pi_\theta}\sum_t\gamma^t\mathbb{E}_{h\sim d_{\mu_k}^t}[D_{\mathrm{KL}}(\mu_k(\cdot\mid x,h)\Vert\pi_\theta(\cdot\mid x,h))]$. Training histories cover both the failure states the student would visit and the new states induced by the student after teacher correction---not fixed teacher trajectories, nor mechanically stitched student suffixes.

Because $\mu_k$ is itself the $g_k$-mixture of $\pi_k$ and $\bar{\tau}_k$ (Eq.~\ref{eq:hybrid}), substituting into the KL and dropping parameter-independent entropy terms turns the ideal projection \emph{exactly} into a $g_k$-weighted sum of a student-retention term and a teacher-projection term; RE-OPD re-weights teacher supervision by verified local credit:
\begin{equation}
\mathcal{L}_{\mathrm{RE\text{-}OPD}}(\theta)=\sum_t\gamma^{t}\,\mathbb{E}_{h\sim d_{\mu_k}^{t}}\Big[(1-g_k(h))\,D_{\mathrm{KL}}\big(\pi_k\,\Vert\,\pi_\theta\big)+\lambda\, g_k(h)\, w_k(h)\,D_{\mathrm{KL}}\big(\bar{\tau}_k\,\Vert\,\pi_\theta\big)\Big],
\label{eq:loss}
\end{equation}
with all conditional distributions conditioned on $(x,h)$, $\lambda$ controlling local distillation strength, and the local credit weight $w_k(h)=[\LCB_k(h)]_+$. Equation~\ref{eq:loss} is a credit-weighted surrogate inspired by the exact mixture projection, not a strict upper bound of the ideal projection for arbitrary $\lambda w_k(h)$; through $g_k(h)w_k(h)$ it decides \emph{how much} teacher supervision earns training credit \emph{where}.

Restricted estimators implement the same projection (derivation in the appendix): Top-$K$ token probabilities give a truncated sparse estimate, and a black-box teacher degenerates to the trajectory-level hard estimator $\ell_{\mathrm{hard}}(h,\delta)=-\lambda g_k(h)w_k(h)\log\pi_\theta(\delta\mid x,h)$ used throughout our experiments.

Optimizing Eq.~\ref{eq:loss} controls the occupancy-weighted projection error $\varepsilon_{\mathrm{proj},k}$ (definition in the appendix), for which $[J(\mu_k)-J(\pi_{k+1})]_+\le C_{T,\gamma,R}\sqrt{\varepsilon_{\mathrm{proj},k}}$ with $C_{T,\gamma,R}$ depending only on horizon, discount, and return bound (proof in the appendix); combining this with Eq.~\ref{eq:mu-bound} yields the core result.

\begin{theorem}[Recursive verified policy improvement]
\label{thm:main}
If returns are bounded, contracts are valid, the counterfactual verifier satisfies the simultaneous-coverage condition of Eq.~\ref{eq:lcb}, and the occupancy-weighted projection error of RE-OPD is bounded, then with probability at least $1-\alpha$,
\begin{equation}
J(\pi_{k+1})-J(\pi_k)\;\ge\;\widehat{\mathcal{G}}_k-\mathcal{E}_{\mathrm{verify},k}-C_{T,\gamma,R}\sqrt{\varepsilon_{\mathrm{proj},k}}.
\label{eq:thm}
\end{equation}
Hence $J(\pi_{k+1})>J(\pi_k)$ whenever $\widehat{\mathcal{G}}_k>\mathcal{E}_{\mathrm{verify},k}+C_{T,\gamma,R}\sqrt{\varepsilon_{\mathrm{proj},k}}$.
\end{theorem}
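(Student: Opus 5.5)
The plan is to split the per-round change telescopically,
\[
J(\pi_{k+1})-J(\pi_k)=\big[J(\mu_k)-J(\pi_k)\big]-\big[J(\mu_k)-J(\pi_{k+1})\big],
\]
and bound the two brackets separately. The first bracket is lower-bounded by Eq.~\ref{eq:mu-bound}, which holds on the simultaneous-coverage event of probability at least $1-\alpha$. The second bracket is bounded above by $[J(\mu_k)-J(\pi_{k+1})]_+\le C_{T,\gamma,R}\sqrt{\varepsilon_{\mathrm{proj},k}}$, stated just before the theorem. This projection bound is deterministic given $\mu_k$ and the trained $\pi_{k+1}$, so it requires no additional failure probability. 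Combining the two gives Eq.~\ref{eq:thm} on the same event, and the strict-improvement corollary is then immediate.

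To make the argument self-contained, I would first re-derive Eq.~\ref{eq:mu-bound}. Apply the history-based performance-difference lemma to $\mu_k$ against $\pi_k$. Because of Eq.~\ref{eq:hybrid}, the expected advantage of $\mu_k(\cdot\mid x,h)$ under $\pi_k$ equals $g_k(h)\,\mathbb{E}_{\delta\sim\bar\tau_k}[A^{\pi_k}_t(h,\delta)]=g_k(h)\Delta_k(h)$, since the $\pi_k$-component has zero advantage. Work on the coverage event, where $\Delta_k(h)\ge\Dhat_k(h)-b_k(h)$ holds simultaneously for all verified histories; the union bound over the allocation $\sum\alpha_{k,h}\le\alpha$ gives this.

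On the event $g_k^{\mathrm{env}}=1$ we have $\LCB_k>\epsilon\ge0$. Hence $\Dhat_k>b_k\ge0$ and $\Dhat_k=[\Dhat_k]_+$. Everywhere else $g_k=0$, so those histories contribute nothing. Summing with weights $\gamma^t d^t_{\mu_k}$ yields $\widehat{\mathcal G}_k$ minus the $b_k$ term. The residual discrepancy comes from snapshot restoration, from execution identity, and from the augmented-process rewind semantics. Each of these shifts the true local advantage by a bounded amount, and together they are absorbed into $\mathcal{E}_{\mathrm{sys},k}$ by definition.

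One subtlety is that the verifier's $\Dhat_k(h)$ is random and is computed on the same data that sets $g_k$. Simultaneous coverage must therefore be uniform over whichever histories are selected. I would handle this by fixing the $\alpha_{k,h}$ allocation before verification.

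The main obstacle is the projection step, if it must be proven rather than cited. First, I would bound $|J(\mu)-J(\pi)|$ by the occupancy-weighted total variation via a simulation lemma:
\[
|J(\mu_k)-J(\pi_{k+1})|\le\frac{2R_{\max}}{1-\gamma}\sum_t\gamma^t\,\mathbb{E}_{h\sim d_{\mu_k}^t}\,\mathrm{TV}\big(\mu_k,\pi_{k+1}\big),
\]
with the finite-horizon analogue using a $T$-dependent factor. Rolling out under $\mu_k$'s own occupancy avoids compounding from distribution shift. Second, I would apply Pinsker, $\mathrm{TV}\le\sqrt{\mathrm{KL}/2}$, followed by Jensen over the normalized occupancy measure. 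This produces $\sqrt{\varepsilon_{\mathrm{proj},k}}$, with the horizon and normalization constants collected into $C_{T,\gamma,R}$.

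The delicate point is that $\varepsilon_{\mathrm{proj},k}$ must be defined as the ideal mixture KL $\mathrm{KL}(\mu_k\Vert\pi_{k+1})$ under $d_{\mu_k}$, not as the surrogate $\mathcal{L}_{\mathrm{RE\text{-}OPD}}$. The theorem's hypothesis, that the projection error is bounded, is exactly about this quantity, so I would state explicitly that the surrogate only serves to drive it down and is not needed for the inequality.
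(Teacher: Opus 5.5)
This is correct and is essentially the paper's proof: you use the same telescoping through $\mu_k$ on the coverage event $\mathcal{E}_{\mathrm{cov}}$, handle the first bracket by the performance-difference identity together with $\LCB_k>\epsilon\ge 0\Rightarrow \Dhat_k=[\Dhat_k]_+$, and handle the second by the deterministic projection bound. There are two small deviations: you derive the projection bound from the performance-difference lemma with $\pi_{k+1}$ as baseline rather than the paper's occupancy-divergence chain (both yield the same $(1-\gamma)^{-1}\sqrt{S_\gamma/2}$ constant after Pinsker and Cauchy--Schwarz), and the paper handles rewind semantics by reading $J(\mu_k)$ as the augmented-process return rather than folding it into $\mathcal{E}_{\mathrm{sys},k}$.
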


After each projection, RE-0 regenerates trajectories with $\pi_{k+1}$ from the raw task input and re-allocates local teacher credit on the new occupancy.

\begin{remark}[Scope of the guarantee]
\label{rem:scope}
$\widehat{\mathcal{G}}_k$ and $\mathcal{E}_{\mathrm{verify},k}$ are measured per event by the audit and the admission rule; the projection term is not estimated a priori but guarded by the outer acceptance criterion (Remark~\ref{rem:twoc}), which retains a generation only if its measured performance does not regress---verified at the credit level, guarded by measurement at the policy level (per-round sign accounting in Appendix~\ref{app:robust}).
\end{remark}

\subsection{Cold Start and Recursive Refinement}
\label{sec:coldrefine}
The construction assumes nothing about \emph{who} occupies the policy seat: replacing $d_{\pi_k}$ by $d_\sigma$ for any seat policy $\sigma$ defines $\mu^{\sigma}=\mathcal{I}(\sigma)$, and under the simultaneous coverage of Eq.~\ref{eq:lcb} the guarantee of Eq.~\ref{eq:mu-bound} transfers verbatim, with $\mu^\sigma=\sigma$ when the gate is identically zero---no global capability relation, no training (Proposition~1, appendix).

\paragraph{RE-OPD-cold.}
Taking $\sigma=\tau$ puts the teacher in the seat: its own failure histories are likewise diagnosed, repaired under progressive contracts, and verified by the same gate, and the verified-completed trajectories form the cold corpus $\mathcal{D}_0$ on which full-trajectory supervision yields $\pi_1$. $\mathcal{D}_0$'s quality is guaranteed by Proposition~1, not by global teacher reliability---even a teacher unable to complete the task contributes trajectories inheriting verified self-improvement credit, and $\pi_1$ inherits the lower bound of Theorem~\ref{thm:main} with the teacher in the seat (Corollary~1, appendix; the cold surrogate is completion-conditioned there).

\paragraph{RE-OPD-refine.}
Later generations put the student in the seat, $\pi_{k+1}=\mathcal{P}_\Theta(\mathcal{I}(\pi_k))$; the projections from the teacher and student seats are \textbf{RE-OPD-cold} and \textbf{RE-OPD-refine}, sharing all machinery and differing only in the occupancy source of the corpus. Termination is structural: if $\LCB_k(h)\le\epsilon$ for almost all $h\in\operatorname{supp}(d_{\pi_k})$, the gate closes, Eq.~\ref{eq:loss} degenerates to the retention term, and the recursion safely abstains (Corollary~2, appendix)---triggered by exhausted headroom or by teacher effect below the verification noise floor $b_k$.

\begin{remark}[Deployment criterion and two conditions for useful refinement]
\label{rem:twoc}
Given threshold $\rho$, halt if $\widehat{J}(\pi_1)\ge\rho$ and otherwise refine, re-checking after each round. Refinement is useful only when \emph{headroom} $\eta$ (student below $\rho$) and \emph{verifiable teacher effect} $\nu$ (admitted credit mass above the noise floor) hold \emph{jointly}; either alone is insufficient, as the lift experiment in \S\ref{sec:main-results} shows empirically.
\end{remark}

\section{Experiments}
\label{sec:exp}

\subsection{Experimental Setup}
\label{sec:setup}
\paragraph{Tasks and environments.}
We evaluate on six long-horizon robosuite Code-as-Policy tasks---\emph{cube restack}, \emph{spill wipe}, \emph{nut assembly}, \emph{two-arm lift}, \emph{cube stack}, and \emph{door opening}---covering single/dual arms, rigid/articulated objects, and coarse/fine manipulation (details in the appendix). Base-model success spans 4\% to 100\% (restack 8\%, lift 4\%, nut 4\%, spill 68\%, door 96\%, cubestack 100\%), a full difficulty spectrum; each task keeps a held-out seed set disjoint from training collection.

\paragraph{Models, seats, collection, and training.}
The student is Qwen3.8-27B \citep{qwen2026qwen38} fine-tuned with LoRA; the teacher and both diagnostic agents are the qwen3.8-max API. The teacher is black-box (no conditional distributions), so all projections use the trajectory-level hard estimator; Eq.~\ref{eq:loss} thus becomes a supervised loss weighted by verified credit, with the same optimizer family as SFT---the difference is not the loss form but \emph{what enters it}: supervision locations, per-event weights $g_kw_k$, and the retention term $(1-g_k)$ are decided by environment verification, not teacher sampling. Baselines (plain SFT, RFT/STaR, GRPO, OPD without verification) share the optimizer, loss family, and evaluation protocol, differing only in corpus provenance and quality assurance. Collection sends fresh-rollout failures (cycles the student already solves are skipped) through the mechanism of \S\ref{sec:assistance} with LCB admission per Eq.~\ref{eq:lcb}; the same pipeline on the teacher seat and the student seat yields the \textbf{RE-OPD-cold} and \textbf{RE-OPD-refine} corpora (hyperparameters in the appendix). \emph{Abstention rule}: tasks already at $\ge$80\% success do not enter refinement---the practical instance of Corollary~2---and are marked as abstained. The execution system saves full simulation snapshots at code decision points; diagnostic evidence $z$ includes 1--2 perception-derived physical quantities per task (\S\ref{sec:ablations}).

\paragraph{Evaluation protocol.}
All methods use 50 paired held-out seeds per task (Wilson 95\% CIs in Appendix~\ref{app:robust}; paired per-seed outcomes for McNemar tests ship with the released artifacts) and share identical training and evaluation infrastructure apart from the variable under study. One protocol caveat: stage-entry ($\rho$) decisions reused the 50-seed evaluation cohort; collection seeds are always disjoint, and re-evaluation of all base and cold checkpoints on a disjoint, never-used 50-seed cohort reproduces the reported conclusions within binomial variation (Appendix~\ref{app:trseed}).

\subsection{Main Results}
\label{sec:main-results}
Table~\ref{tab:main} reports held-out success on 50 seeds across the six tasks.

\begin{table}[t]
\centering
\footnotesize
\begin{tabular}{lccccc}
\toprule
Task & Base & Flash-API & RE-OPD-cold ($n$ rows) & RE-OPD-refine \\
\midrule
restack & 4/50$^{a}$ & 2 & \textbf{31} (3) & \textbf{40}$^{b}$ \\
spill & 34 & 33 & \textbf{50} (67) & ---$^{c}$ \\
nut & 2 & 26 & \textbf{44} (27) & --- \\
lift & 2 & 6 & \textbf{38} (7) & 29$^{d}$ \\
door & 48 & --- & \textbf{50} (40) & --- \\
cubestack & 50 & 50 & 50 (0, no-train)$^{e}$ & --- \\
\bottomrule
\end{tabular}
\caption{\textbf{Main results} (successes / 50 held-out seeds). Flash-API: same teacher, single-shot without diagnose--repair--verify. $n$: verified training rows. ``---'': criterion abstained, not unmeasured. Trainer-seed replication (three seeds, Appendix~\ref{app:trseed}): restack 31/44/36, spill 50/33/33, lift 38/33/35, nut 44/40/43, door 50/50/50.}
\label{tab:main}
\end{table}

\paragraph{Sample efficiency of verified data.}
RE-OPD-cold lifts the base from 4--68\% to 62--100\% with 3--67 verified rows (restack 62\% with 3 rows, lift 76\% with 7, nut 88\% with 27)---improvement is not monotone in row count; the decisive factor is the verified credit each row carries (Corollary~1), not corpus size. The same teacher's single-shot calls reach only 4\% on restack while solving nut (52\%): without the diagnose--repair--verify loop, teacher capability does not land on hard tasks. \emph{When no teacher is needed} is decided by the framework itself: cubestack is saturated and exempted; door goes 96\%$\to$100\% with 40 rows.

\paragraph{Adaptive criterion.}
Refinement happens only where the target is unmet: restack 62\%$<\rho{=}80\%$ refines to 80\%; spill is already at 100\%, its admission is empty, and the framework correctly abstains---a runtime instance of Corollary~2. Admissions decay with occupancy (6/6 at the $\pi_1$ round vs.\ 4/15 at $\pi_2$; Figure~\ref{fig:admission}b; per-round accounting in the appendix). The lift row exercises the outer acceptance test (Remark~\ref{rem:scope}): 76\%$<\rho$ but $\pi_2$ admits one event ($w{=}0.069$)---thin verified credit; the projection regresses to 58\% with branch collapse (Appendix~\ref{app:robust}, note~(d); the regression replicates at training seed 2, 76\%$\to$46\%), and the criterion halts the recursion, disclosing the regressed checkpoint rather than propagating it: $\eta$ without $\nu$ yields a harmful projection, and it is the stage-entry measurement that catches it.

\paragraph{Failure of pooling.}
Naively pooling verified data across tasks causes severe interference despite larger corpus size (Table~\ref{tab:pooling}, appendix), consistent with the allocation failure mode in \S\ref{sec:credit}.

\subsection{Verification and Credit Analysis}
\label{sec:analysis}

\paragraph{The admission ledger measures teacher effect.}
The same gate, applied to different teachers, yields a monotone ladder (appendix, Table~\ref{tab:ladder}): 0 admissions for student self-play (34 cycles) and for narrow local experts (32 events; $\sim$10\% rescue ceiling, deterministic contract-format failures), versus 4 admissions on 13 events for the strong API teacher (rescue rates 19--47\%). These gradings precede any training: the verifier directly measures the teacher effect $\nu$ of Remark~\ref{rem:twoc}---not ``who is stronger,'' but ``who holds verifiable local credit on the student's failure history.''

\paragraph{Strong separation makes the strict gate free; the gate is two-sided.}
Admitted events cluster at $\Dhat\in[0.19,0.44]$ while abstained ones satisfy $|\Dhat|\le0.04$ (Figure~\ref{fig:admission}c); sweeping the threshold from 0.24 to 0.01 leaves the admitted set unchanged (Figure~\ref{fig:admission}d)---the LCB gate's statistical strictness costs no sensitivity. The verifier also rejects harm: one API patch dropped the baseline 8/16$\to$1/16 ($\Dhat=-0.44$) and was rejected. Credit assignment is a decision on the causal \emph{sign} of an intervention, not one-directional imitation.

\begin{figure}[t]
\centering
\includegraphics[width=\linewidth]{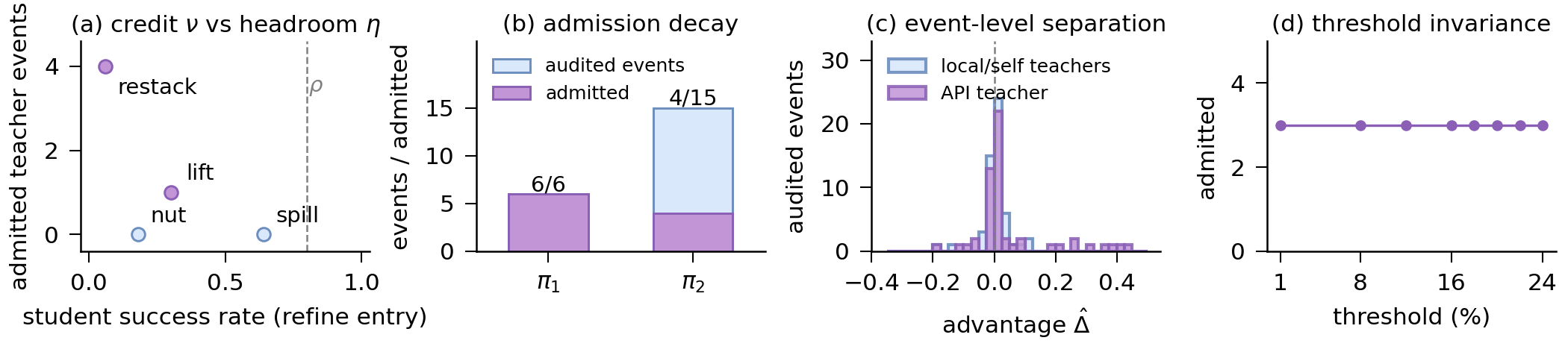}
\caption{\textbf{Credit analysis.} (a) Headroom alone does not imply credit: all tasks sit below $\rho{=}0.80$ yet admissions range 0--4. (b) Admission decay across rounds. (c) $\Dhat$ separation: admitted vs.\ abstained over 110 events. (d) Threshold sweep: the admitted set is invariant.}
\label{fig:admission}
\end{figure}

\begin{figure}[!t]
\centering
\includegraphics[width=\linewidth]{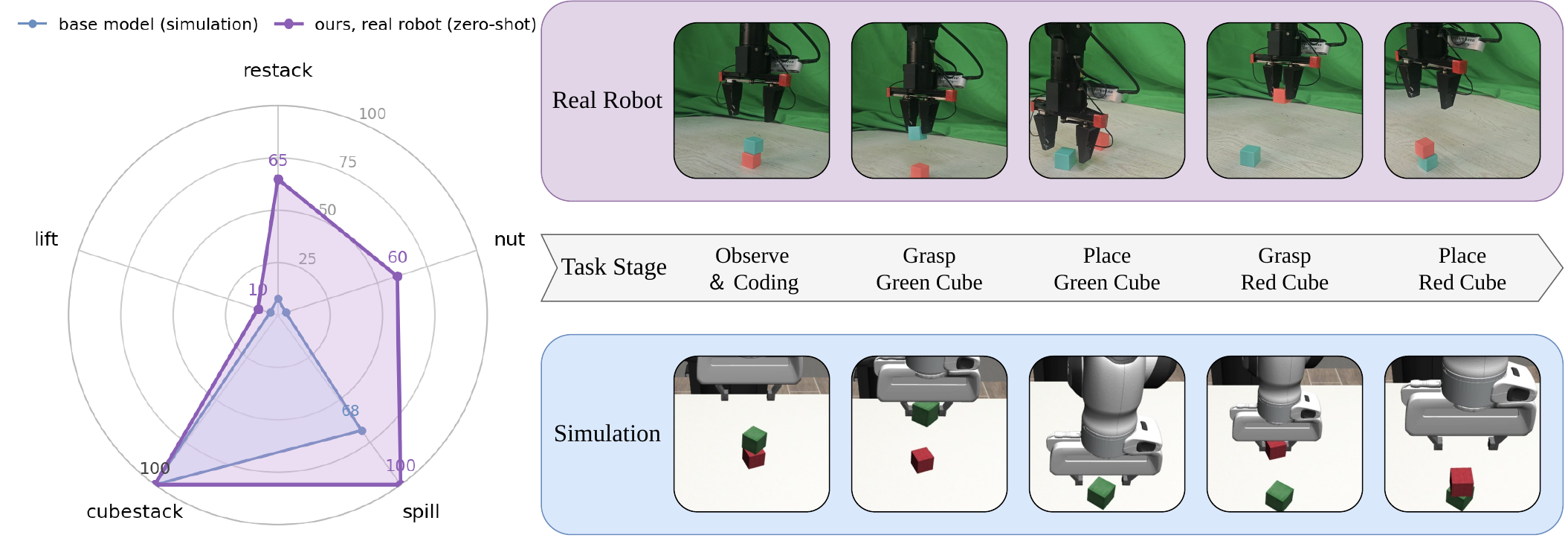}
\caption{\textbf{Zero-shot sim-to-real transfer.} Real-robot execution of the simulation-trained policies on the dual-arm AgileX PiPER platform: task keyframes (left), hardware success counts on 20 trials per task (middle), and hardware success rates against the simulation base model (right). The deployed policy code is identical to the simulation policy---no hardware-specific edits.}
\label{fig:simreal}
\end{figure}

\paragraph{Passing the gate $\neq$ policy flip.}
Weight-scale and normalization ablations cannot recover the gain of the verified corpus and can induce catastrophic forgetting (Appendix~\ref{app:selection}), showing that verification determines corpus quality while weighting alone cannot substitute for composition-preserving training. This isolates the load-bearing variable: the gate's product is the corpus.

\paragraph{GRPO control.}
With terminal rewards in the same environments, GRPO \citep{shao2024deepseekmath} undergoes entropy collapse (0.30$\to$0.002) and transfers none of it: 0--1/50 on the non-saturated tasks against base 8/68/4\%, retaining only the saturated task (details and tuning notes in the appendix). Sparse terminal rewards cannot localize the harmful decision inside a long program, in mechanistic contrast to locate--verify--weight.

\label{pg:lA}\subsection{Ablations}
\label{sec:ablations}
\paragraph{Selection-rule ablation: the gate is the source of credit quality.}
Holding the candidate journal and training recipe fixed, LCB admission outperforms unfiltered and sign-only selection on lift across three trainer seeds (Table~\ref{tab:selection}; Appendix~\ref{app:selection}).

\begin{table}[t]
\centering\small
\begin{tabular}{llcccc}
\toprule
Selection rule (lift journal) & Rows & \multicolumn{3}{c}{Successes / 50} & Mean \\
 & & $s_1$ & $s_2$ & $s_3$ & \\
\midrule
None (all candidates) & 25 & 6 & 2 & 3 & 3.7 \\
Sign ($\Dhat>0$) & 7 & 5 & 4 & 4 & 4.3 \\
LCB (paired, Eq.~\ref{eq:lcb}) & 3 & \textbf{11} & \textbf{5} & \textbf{13} & \textbf{9.7} \\
Outcome (teacher programs) & 27 & 5 & 9 & 4 & 6.0 \\
\bottomrule
\end{tabular}
\caption{\textbf{Selection-rule ablation} (lift; three trainer seeds; uniform weights). LCB dominates unfiltered and sign rules in every seed pairing.}
\label{tab:selection}
\end{table}

\paragraph{Corpus composition is decisive.} On restack, success-filtered teacher demonstrations remain at base (3/50), whereas three verified event patches reach 46/50. Outcome filtering selects the identical corpus on this task, so the discriminating admission-rule comparison is Table~\ref{tab:selection} (Appendix~\ref{app:selection}).

\paragraph{Objective-composition quartet; teacher necessity.}
Replacing only the objective composition (restack; separate 50-seed cohort; armwise scores, definitions, and per-cohort variance in the appendix) confirms the decomposition of Eq.~\ref{eq:loss}: stripping the student prefix (patch-only) or the retention term (weighted) collapses the gain that the full objective and uniform credit largely retain. Student self-bootstrap yields 0 admissions on four tasks, echoing \S\ref{sec:analysis}.

\label{pg:lG}\subsection{Generalization and Real-Robot Deployment}
\label{sec:generalization}
\paragraph{Transfer and deployment.}
Cross-task evaluation shows selective rather than universal transfer: restack training improves nut while spill's wiping dynamics do not carry, and pooled multi-task training exhibits substantial interference (Appendix~\ref{app:transfer}). We additionally deploy the distilled, teacher-free policies zero-shot, without hardware-specific policy edits, on a dual-arm AgileX PiPER platform with SAM3-based open-vocabulary perception, obtaining 13/20 restack, 12/20 nut, 20/20 spill, 20/20 cubestack, and 2/20 lift successes (Figure~\ref{fig:simreal}, Appendix~\ref{app:realrobot}). Verification is used only during simulation-side collection; hardware deployment makes no teacher calls.

\label{pg:lC}\section{Conclusion}
\label{sec:conclusion}
We recast self-improvement of embodied Code-as-Policy agents as a \textbf{locate--verify--weight} recursion: teacher interventions are minimal patches, verified event-wise by same-snapshot counterfactual experiments, admitted only with finite-sample confidence, and returned to the student after a single bounded macro-action. Across six tasks and physical deployment, 3--67 verified rows lift the base from 4--68\% to 62--100\%, while the same admission ledger doubles as a measurement instrument driving self-termination. The central empirical finding is that verified corpus selection, not weighting alone, is the load-bearing mechanism---confirmed by selection ablations across three trainer seeds (Table~\ref{tab:selection}); making the ideal allocation of supervision under composition constraints explicit is the natural next step.

\clearpage
\label{pg:lL}\label{page:ai}
\subsection*{AI use statement}
\paragraph{Roles of AI systems.} All policies evaluated in this work are LLM-based code-as-policy agents; they constitute the \emph{object of study} and are reported as experimental setup, not as AI assistance.
\paragraph{Required disclosure.} We used generative AI tools to: (i) design and provide feedback on research methodology and experiments (experiment prioritization and ablation design were discussed with an AI assistant; all protocol parameters were fixed in advance by the authors and never altered); (ii) implement methods (operational scripts for training, evaluation, and data assembly were drafted with AI assistance and executed under author supervision; the core environment and pipeline code is the authors' own); (iii) interpret results (log analysis and attribution of negative results were AI-assisted and re-verified by the authors against raw artifacts); and (iv) assist with translation of the manuscript draft. We did \emph{not} use generative AI to generate synthetic datasets, to formulate or prove mathematical claims, or to produce any experimental data: every number in this paper is programmatically extracted from logged evaluation runs of the described pipeline.
\paragraph{Recommended disclosure.} We additionally used generative AI tools to draft parts of the paper, edit text for readability, create scientific figures (all figure scripts and underlying event data will be released), suggest paper structure, and format references.
\paragraph{Responsibility.} All AI-assisted code was executed and tested by the authors, all reported numbers were cross-checked against raw logs, and all AI-assisted text was reviewed and revised by the authors. We take full responsibility for the final content of this work.

\subsection*{Reproducibility statement}
We will release the complete pipeline and all artifacts: (i) environment configurations for all six robosuite tasks; (ii) the full admission journals---110 audited events with paired counterfactual advantages, lower confidence bounds, and assigned training weights; (iii) every verified training parquet with row-level provenance to admission events; (iv) raw evaluation JSONs underlying all reported evaluation tables and ablations, together with the admission journals underlying the analysis figures; and (v) training configurations. All numbers in the paper are traceable to these artifacts via a release manifest.

\label{page:refs}
\bibliography{refs}
\bibliographystyle{iclr2027_conference}

\appendix
\numberwithin{equation}{section}

\section{Limitations}
(1) The allocation layer is not closed: $w\equiv\mathrm{LCB}$ suppresses patches to 1--1.5\% weight, $\Sigma w\to1$ normalization fails to flip and destroys composition, and unprotected distillation forgets; converting verified credit into policy change needs an allocation theory under composition constraints (we contribute a stress test and the $\eta/\nu$ criteria). The teacher is black-box, so deployed distillation uses the trajectory-level hard estimator; the credit weighting did not measurably outperform uniform weighting on the same verified corpus---we claim verified data selection, not the weighting scheme, as the empirical mechanism. (2) Teacher necessity is calibrated against an API frontier model, and paired-counterfactual cost grows linearly with audited events. (3) Conclusions rest on six robosuite families and one physical platform; transfer is selective. (4) The $\pi_{1.5}$ row carries a collection-seat artifact; ablation pools differ from the main-table protocol; the normalization arm is a single run, and selection-ablation arms are single runs per seed (three seeds on lift). (5) The handback design (control returning to the student immediately after each verified patch, \S\ref{sec:assistance}) rests on design argument, and RFT/STaR remains inferred from the gate-off ablation and the self-imitation arm. (6) Stage-entry ($\rho$) decisions reused the evaluation cohort as validation (validation and test are not fully separated for the reported stage-entry decisions); a disjoint never-used cohort re-evaluation reproduces the base/cold conclusions (Appendix~\ref{app:trseed}), and the released protocol separates them prospectively. The reuse is consequential only for thin margins: retained generations cleared $+18$ to $+84$ points and abstentions sat $16$--$20$ points above $\rho$; the one thin entry (lift, 76\% against $\rho{=}80\%$) is precisely the case the criterion re-measured and rejected (\S\ref{sec:main-results}).

\section{Proofs and statistical guarantees}
\label{app:proofs}

Throughout, $\bar R := R\,\frac{1-\gamma^{T}}{1-\gamma}$ bounds any trajectory return ($T$-step horizon, $|r_t|\le R$), and $S_\gamma:=\sum_{t=0}^{T-1}\gamma^{t}$. Snapshot-restoration and execution-identity discrepancies are booked into $\mathcal{E}_{\mathrm{sys},k}$.

\paragraph{Formal definitions (\S\ref{sec:credit}).}
\begin{itemize}\setlength\itemsep{2pt}
\item \emph{Return.} $J(\pi)=\mathbb{E}_{x\sim\rho,\zeta\sim P^\pi}\big[\sum_{t=0}^{T-1}\gamma^t r_t\big]$.
\item \emph{Histories.} At the $t$-th code decision point, $h_t=(o_0,c_0,o_1,c_1,\ldots,c_{t-1},o_t)$ with bounded segments $c_t\in\mathcal{C}$ treated as macro-actions; the occupancy is $d_{\pi_k}^{t}(h_t\mid x)=\Pr_{\pi_k}(H_t{=}h_t\mid x)$.
\item \emph{Teacher policy and action values.} The teacher's local policy $\tau_k(\delta\mid x,h_t)$ proposes a bounded correction whose suffix is regenerated by $\pi_k$;
\begin{equation}
Q_t^{\pi_k}(h_t,\delta)=\mathbb{E}\Big[\sum_{s\ge t}\gamma^{s-t}r_s\,\Big|\,H_t{=}h_t,\,C_t{=}\delta,\,C_{>t}{\sim}\pi_k\Big],
\end{equation}
with $V_t^{\pi_k}(h_t)=\mathbb{E}_{c\sim\pi_k}[Q_t^{\pi_k}(h_t,c)]$, $A_t^{\pi_k}=Q-V$, and $\Delta_k(h_t)=\mathbb{E}_{\delta\sim\tau_k}[A_t^{\pi_k}(h_t,\delta)]$.
\item \emph{Contract selection.} $\chi_k^{\star}(\widehat h)=\arg\min_{\chi\in\mathfrak X(\widehat h,z)}\kappa(\chi)$ subject to $\widehat{\mathrm{LCB}}_k(\widehat h;\chi)>\epsilon$ and $\operatorname{Valid}(\chi,\bar\tau_k^\chi,s^{\mathrm{snap}}_{\widehat h})=1$, applied per $\bar\tau_k^\chi$ during selection, where $\operatorname{Valid}$ requires that the contract-constrained policy support meets execution-safety and program-context constraints and that snapshot-restoration and execution-identity checks agree; with constrained decoding, support constraints are realized implicitly by the decoder.
\end{itemize}

\paragraph{Two-layer confidence radius.}
Fix a history $\widehat h$ and let teacher candidates $\delta_1,\dots,\delta_K\overset{\mathrm{iid}}{\sim}\bar\tau_k(\cdot\mid x,\widehat h)$ with paired estimates $\widehat A_k(\widehat h,\delta_j)=\frac1M\sum_{m=1}^M[G^{T}_{jm}-G^{S}_{jm}]$ under shared randomness $\xi_m$, and $\widehat\Delta_k(\widehat h)=\frac1K\sum_j \widehat A_k(\widehat h,\delta_j)$. Since the paired differences lie in $[-\bar R,\bar R]$ with common noise eliminated by pairing, Hoeffding applied to the inner ($M$-pair) and outer ($K$-candidate) layers gives, for any $\alpha_1,\alpha_2\in(0,1)$, with probability at least $1-(\alpha_1+\alpha_2)$,
\begin{equation}
\big|\widehat\Delta_k(\widehat h)-\Delta_k(h)\big|\;\le\;\underbrace{\bar R\sqrt{\tfrac{2\ln(2/\alpha_1)}{K}}+\bar R\sqrt{\tfrac{2\ln(2/\alpha_2)}{M}}}_{=\,b_k(\widehat h;K,M,\alpha)}.
\label{eq:app-radius}
\end{equation}

\paragraph{Simultaneous coverage and selection.}
Allocate levels $\alpha_{k,h}$ with $\sum_{k,h}\alpha_{k,h}\le\alpha$ across all histories verified in generation $k$; by the union bound the event $\mathcal{E}_{\mathrm{cov}}:=\{\forall (k,h): \Delta_k(h)\ge\widehat\Delta_k(h)-b_k(h;K,M,\alpha_{k,h})\}$ holds with probability at least $1-\alpha$. The admission gate $g_k$ is measurable in $\widehat\Delta_k$ (data-dependent selection), but $\mathcal{E}_{\mathrm{cov}}$ holds uniformly over \emph{all} histories, so selection does not invalidate the per-history bound---this is why simultaneous rather than pointwise confidence bounds are used.

\paragraph{Implementation note (analytical vs.\ operational bound).}
The Hoeffding radius above is the analytical form used in the theorem statements. The deployed verifier (\texttt{anytime\_valid\_betting\_mixture\_v1}) uses an anytime-valid e-process over the same paired differences (success and normalized-reward channels in parallel), which enjoys the same coverage guarantee with a tighter radius; the theorem statements are unaffected by this choice. Admission used $\epsilon=0$ and $\alpha=0.05$; each audited event starts at $M{=}8$ shared-randomness pairs and escalates ($12/16/24/32/48$) when evidence is insufficient; candidates are drawn under top-$k$ constrained decoding within the selected contract. At the base pairing, one audited event costs up to $(K{+}1)\times M$ suffix rollouts; the full ledger contains 110 audited events.

\paragraph{History--state performance-difference lemma.}
For any policies $\pi,\pi'$,
\begin{equation}
J(\pi')-J(\pi)=\sum_{t=0}^{T-1}\gamma^t\,\mathbb E_{h\sim d_{\pi'}^{t}}\big[\mathbb E_{c\sim\pi'(\cdot\mid x,h)}[Q_t^{\pi}(h,c)]-V_t^{\pi}(h)\big].
\label{eq:app-pdl}
\end{equation}
\emph{Proof.} Induction on $T$. For $T{=}1$ both sides equal $\mathbb E_{h\sim d^0}\mathbb E_{c\sim\pi'}[r(h,c)]-V^\pi$. For the step, decompose $Q_T^\pi(h,c)=r(h,c)+\gamma\,\mathbb E_{h'}V_{T-1}^{\pi}(h')$, difference the first-step expectations of $\pi'$ and $\pi$, expand the continuation of $\pi'$ along $d_{\pi'}^{t}$, and apply the hypothesis. Only the Markov property of the macro-action history process is used. $\square$

\paragraph{Execution-time identity.} Under the mixture of Eq.~\ref{eq:hybrid},
\begin{equation}
J(\mu_k)-J(\pi_k)=\sum_{t}\gamma^t\mathbb E_{h\sim d_{\mu_k}^{t}}\big[g_k(h)\,\Delta_k(h)\big].
\label{eq:app-identity}
\end{equation}
\emph{Proof.} Take $\pi'=\mu_k$ in the lemma; per history, the student component $(1-g_k)\pi_k$ cancels against $V_t^{\pi_k}(h)$ and the teacher component contributes $g_k(h)\Delta_k(h)$. $\square$

\paragraph{Execution-time bound (Eq.~\ref{eq:mu-bound}).} On $\mathcal{E}_{\mathrm{cov}}$, $g_k(h)\Delta_k(h)\ge g_k(h)\big(\widehat\Delta_k(h)-b_k(h)\big)$ per history; on admitted histories $\widehat\Delta_k(h)>b_k(h)\ge0$ so $[\widehat\Delta_k(h)]_+=\widehat\Delta_k(h)$, and both terms vanish elsewhere. Summing yields $J(\mu_k)-J(\pi_k)\ge\widehat{\mathcal{G}}_k-\mathcal{E}_{\mathrm{verify},k}$. $\square$

\paragraph{Seat invariance (Proposition 1).} The proofs above use only the history distribution induced by the seat policy; replacing $\pi_k$ by any $\sigma$ throughout gives $J(\mu^\sigma)-J(\sigma)\ge\widehat{\mathcal{G}}^{\sigma}-\mathcal{E}_{\mathrm{verify},\sigma}$, with $\mu^\sigma=\sigma$ when $g\equiv0$ (identical process), and strict inequality when a visited history carries $g(h)>0$ (its admitted contribution is strictly positive and the verification term is controlled by admission). $\square$

\paragraph{Occupancy divergence chain.} For any $\theta$ and $t\ge1$,
\begin{equation}
\mathrm{TV}(d_{\mu_k}^{t},d_{\pi_\theta}^{t})\le\sum_{s<t}\mathbb E_{h\sim d_{\mu_k}^{s}}\big[\mathrm{TV}\big(\mu_k(\cdot\mid x,h),\,\pi_\theta(\cdot\mid x,h)\big)\big].
\label{eq:app-chain}
\end{equation}
\emph{Proof.} The two history distributions can diverge only after their first divergence point; sum over divergence times and merge same-time contributions by the recursion of the history distributions (standard simulation argument). $\square$

\paragraph{Projection-error bound.} Combining the chain with Pinsker ($\mathrm{TV}\le\sqrt{D_{\mathrm{KL}}/2}$) and Cauchy--Schwarz,
\begin{equation}
|J(\mu_k)-J(\pi_\theta)|\;\le\;C_{T,\gamma,R}\sqrt{\varepsilon_{\mathrm{proj},k}},
\label{eq:app-proj}
\end{equation}
with $\varepsilon_{\mathrm{proj},k}=\sum_t\gamma^t\mathbb E_{h\sim d_{\mu_k}^{t}}[D_{\mathrm{KL}}(\mu_k\Vert\pi_\theta)]$ and $C_{T,\gamma,R}=\bar R_{\mathrm{norm}}\sqrt{S_\gamma/2}/(1-\gamma)$ for the appropriately normalized return bound.

\paragraph{Theorem 1.} On $\mathcal{E}_{\mathrm{cov}}$, decompose $J(\pi_{k+1})-J(\pi_k)=[J(\mu_k)-J(\pi_k)]-[J(\mu_k)-J(\pi_{k+1})]$ and bound the first term by the execution-time bound and the second by the projection-error bound at $\theta$ of $\pi_{k+1}$. $\square$

\paragraph{Token-level decomposition.} When teacher and student share a tokenizer, $D_{\mathrm{KL}}(\bar\tau_k\Vert\pi_\theta)$ decomposes into per-token conditional KLs by the chain rule, with the termination symbol included in $\delta$. $\square$

\paragraph{Corollary 1 (cold-start amortization).} The cold corpus $\mathcal{D}_0$ consists of \emph{verified-completed} trajectories of $P^{\mu^{\tau}}$, so the full-trajectory empirical risk $\frac1{|\mathcal{D}_0|}\sum_\zeta-\log\pi_\theta(\zeta)$ is an unbiased surrogate of the \emph{completion-conditioned} divergence $D_{\mathrm{KL}}(P^{\mu^{\tau}}\!\mid\!C\,\Vert\,P^{\pi_\theta})$ ($C$ the completion event; trajectory-level chain decomposition), whose population optimum is the projection on the success-conditioned occupancy---not on the unconditioned $\mu^\tau$ occupancy. This conditioning is the cold-start instance of the same credit weighting: with no per-history gates yet, verification reduces to the completion indicator---the trajectory-level $w\equiv\mathbf{1}\{C\}$ of Eq.~\ref{eq:loss}---and the Theorem-1 decomposition carries the teacher-seat bound of Proposition~1 through with the projection error measured against this weighted target. The gap between the conditioned and unconditioned occupancies is exactly the selection cost that later rounds avoid by conditioning on \emph{local counterfactual} credit instead of trajectory outcomes. $\square$

\paragraph{Corollary 2 (safe abstention).} If $\underline\Delta_\sigma(h)\le\epsilon$ almost everywhere, then $g_\sigma(h)=0$, so $\mu^\sigma=\sigma$ per history (no execution-time change) and the training weight $g_\sigma(h)w_\sigma(h)\equiv0$ (no teacher term): neither execution nor training introduces unverified teacher credit. $\square$

\paragraph{Remark (rewind semantics, full).} The deployed pipeline runs two passes: the student's live attempt realizes a failure and localizes candidate boundaries; verification then rewinds to the stored snapshot $s_{\widehat h}^{\mathrm{snap}}$. We analyze the \emph{post-rewind} process: $J(\mu_k)$ measures the return of the repaired continuation from $s_{\widehat h}^{\mathrm{snap}}$ onward, and both arms of every paired comparison (teacher correction vs.\ student resample) start from the \emph{same} snapshot under shared subsequent randomness. The first pass is therefore a fixed data-collection cost common to the comparison and cancels in Eq.~\ref{eq:mu-bound}; equivalently, $\mu_k$ is a one-pass policy over histories whose state includes the snapshot stack. Formally, the augmented process has state $(s,\mathcal{Z})$ with $\mathcal{Z}$ the snapshot--evidence stack: the diagnostic evidence $z$ observed after the boundary is part of the state that \emph{selects} the intervention point, so whether to intervene at $\widehat h$ depends on information observed after $\widehat h$ in the original rollout. $J(\mu_k)$ is the expected return of the post-rewind continuation under this two-pass process; it is not the return of a causal online policy in the original environment, and the first pass cancels only because both arms of each paired comparison share it as a fixed cost. Theorem~\ref{thm:main} bounds the augmented-process return; a snapshot-free deployment that diagnoses from the current history alone---as our hardware deployment does, running the standalone student with no teacher calls---is outside this guarantee.

\paragraph{Ideal supervision allocation.} Given a budget $B$ of teacher calls and verification rollouts, the ideal allocation solves, over histories,
\begin{equation}
\max_{q_k}\;\sum_t\gamma^t\,\mathbb E_{h\sim d_{\pi_k}^{t}}\big[q_k(h)\,\Delta_k(h)\big]
\quad\text{s.t.}\quad
\sum_t\gamma^t\,\mathbb E_{h\sim d_{\pi_k}^{t}}\big[q_k(h)\,\kappa(h)\big]\le B,\;\;q_k(h)\in[0,1],
\label{eq:app-allocation}
\end{equation}
a fractional-knapsack linear program requiring $\Delta_k(h)$ and the occupancy, neither directly observable. RE-0's diagnostic localization (shrinking the candidate history set) and LCB admission (accepting only verifiable positive credit) form its practical proxy: in the empirical-LCB sense, the admitted set is exactly the histories with estimated positive credit.

\section{Experimental details and reproducibility}
\label{app:repro}

\paragraph{Training and evaluation configuration.} All students are Qwen3.8-27B with LoRA ($r{=}16$, $\alpha{=}32$, all-linear), trained with the trajectory-level hard estimator for 120 steps (batch 2, LR $10^{-5}$ constant, seed 1, bf16); refinement warm-starts from the previous generation at LR $5\times10^{-6}$. Evaluation uses 50 held-out seeds per task (temperature 0.2, top-$p$ 0.95, non-thinking mode) reset seeds from the 210000001 series and generation seeds from the 220000001 series (stride 10007), disjoint from collection. Collection verifies each contract with $K$ teacher candidates against $M$ shared-randomness paired rollouts; admission requires the two-layer LCB of Eq.~\ref{eq:lcb} to exceed $\epsilon$ under the $\alpha$-allocated simultaneous budget.

\paragraph{Verification cost.} Across the six tasks the full ledger contains 110 audited events, of which 19 were admitted. Each event verifies its selected candidate from the boundary snapshot with $M$ shared-randomness pairs (escalating $8/12/16/24/32/48$ when evidence is insufficient; observed distribution $27/50/3/8/1/21$ events, median $M{=}12$), each pair executing one student resample and one teacher continuation: the selected candidates alone consumed $4{,}192$ suffix rollouts in total. Rejected candidates and the live first pass add further environment rollouts, so verified rows are information-dense but not rollout-free.

\section{Main-result accounting and robustness}
\label{app:robust}

\textbf{Table~\ref{tab:main} notes.} (a) Base row uses the standard 50-seed evaluation cohort, disjoint from collection. (b) The refine cell for restack comes from an intermediate teacher-seat recollection; a collection-seat artifact affecting this cell is disclosed in the Limitations. (c) Spill saturates at stage one; its counterfactual admission set is empty, so the criterion abstains---the runtime instance of Corollary~2. (d) Lift refinement consumed a single admitted event ($w{=}0.069$) plus replay: 76\%$\to$58\%, with collateral collapse on the branch (spill $28\to0$)---headroom without credit mass is harmful (Remark~\ref{rem:twoc}). (e) Cubestack is saturated at base; the system exempts it from training. The transfer table's own-task cells for $\pi_1/\pi_2'$ (restack) are 31/40 (Table~\ref{tab:main}). (f) Composition arms (restack, 50 seeds): plain SFT on 8 success-filtered teacher-seat demonstrations scores 3/50; SFT on outcome-selected patches (3 rows, unweighted) scores 46/50. The nine patches rejected by the outcome rule all have intervention success 0.0---zero divergence from the LCB gate on this corpus; both arms are single runs. (g) Row counts and admission counts are different objects: cold rows count verified-completed \emph{teacher-seat} trajectories (the Corollary~1 corpus), while Figure~\ref{fig:admission}b counts admission events in \emph{student-seat} collection rounds.

\paragraph{Per-round sign accounting.}
Theorem~\ref{thm:main} bounds the standalone gain by the verified intervention gain net of verification and projection terms; the first two are measured at admission, while the projection term is guarded operationally by the outer acceptance criterion (Remark~\ref{rem:scope}). Table~\ref{tab:projgap} reports, for every round in which the criterion made a retain-or-halt decision, the admitted credit of the round against the realized change in standalone performance on the 50-seed protocol. The signs agree in all four rounds, including the round the criterion rejected. We read this as sign-level consistency evidence over four rounds, not an estimate of $\varepsilon_{\mathrm{proj},k}$. Re-executing the rejected lift round with training seed 2 reproduces the regression (76\%$\to$46\%), so the halt reflects the round's credit rather than a seed artifact.

\begin{table}[h]
\centering\small
\begin{tabular}{lllll}
\toprule
Round & Admitted credit & $\widehat{J}$ change & Decision & Retained \\
\midrule
restack $\pi_0\!\to\!\pi_1$ (cold) & 6/6 events, $\Dhat\!\approx\!+1.0$ & $+54$\,pt ($8\!\to\!62\%$) & $J<\rho$: continue & $\pi_1$ \\
restack $\pi_1\!\to\!\pi_2'$ (refine) & 4/15, $\Dhat\!\in\![0.19,0.88]$ & $+18$\,pt ($62\!\to\!80\%$) & $J\ge\rho$: halt & $\pi_2'$ \\
lift $\pi_1\!\to\!\pi_2$ (refine) & 1 event, $w{=}0.069$ & $-18$\,pt ($76\!\to\!58\%$) & regression: halt & $\pi_1$ \\
spill $\pi_1\!\to$ (refine) & 0 admitted & --- (no projection) & abstain (Cor.~2) & $\pi_1$ \\
\bottomrule
\end{tabular}
\caption{Per-round sign accounting: admitted credit against realized standalone change. The retained generation is the measured-better one in every round.}
\label{tab:projgap}
\end{table}

\paragraph{Wilson confidence intervals.}
\begin{table}[h]
\centering\footnotesize
\begin{tabular}{lccc}
\toprule
Task & Base & RE-OPD-cold & RE-OPD-refine \\
\midrule
restack & [3.2, 18.8] & [48.2, 74.1] & [67.0, 88.8] \\
spill & [54.2, 79.2] & [92.9, 100.0] & --- \\
nut & [1.1, 13.5] & [76.2, 94.4] & --- \\
lift & [1.1, 13.5] & [62.6, 85.7] & [44.2, 70.6] \\
door & [86.5, 98.9] & [92.9, 100.0] & --- \\
cubestack & [92.9, 100.0] & [92.9, 100.0] & --- \\
\bottomrule
\end{tabular}
\caption{Wilson 95\% CIs (\%) for Table~\ref{tab:main}.}
\end{table}

\paragraph{Trainer-seed replication.}
\label{app:trseed}
We retrained the restack and spill cold-start arms with two additional trainer seeds on the identical corpora and recipe. Restack: seed 1 $=31/50$, seed 2 $=44/50$, seed 3 $=36/50$ (mean 37.0)---every replicate far above the base 4/50. Spill: seed 1 $=50/50$, seeds 2 and 3 $=33/50$ each (mean 38.7)---two of three replicates sit at base level (34/50), so the spill cold-start gain is seed-sensitive. With the same protocol we additionally replicated the remaining cold arms: lift reaches $38/33/35$ (mean 35.3), nut $44/40/43$ (mean 42.3), and door $50/50/50$---every replicate sits far above its base, so the large cold-start gains on the thin-corpus tasks are robust to trainer seed, and spill remains the sole seed-sensitive task.

\paragraph{Never-used cohort re-evaluation.}
Base and cold checkpoints were additionally re-evaluated on a disjoint, never-used 50-seed cohort (reset offset 230000001, generation offset 240000001): base $3/3/5/47/50$ (restack, nut, lift, door, cubestack) and cold $28/45/42/50/49$ (restack, nut, lift, door, spill) match the reported cells within binomial noise, confirming that the reported numbers are not artifacts of the evaluation-seed protocol.

\section{Mechanism and ablation details}
\label{app:selection}

Table~\ref{tab:selection} holds each task's collection journal fixed and varies only the admission rule. Provenance: the lift/spill/nut journals are teacher-seat (self-play) collections; all arms use identical row construction (uniform weight, full-trajectory SFT). The four lift events admitted by the sign rule ($\Dhat>0$) but rejected by the LCB have $\Dhat\in\{+0.21,+0.06,+0.04,+0.00004\}$ at $M\in\{48,48,48,12\}$: positive in mean, statistically indistinguishable from zero. Table~\ref{tab:selectionfull} completes the grid with direction cells and the restack self-imitation arm.

\begin{table}[h]
\centering\small
\begin{tabular}{llccl}
\toprule
Task / rule & Rows & Seed 1 & Seed 2 & Note \\
\midrule
lift: none & 25 & 6 & 2 & \\
lift: sign & 7 & 5 & 4 & \\
lift: LCB & 3 & \textbf{11} & \textbf{5} & \\
lift: outcome & 27 & 5 & 9 & \\
spill: none & 6 & 10 & --- & mean reward 0.63 \\
spill: sign & 2 & 0 & --- & collapse; reward 0.00 \\
spill, nut: LCB & 0 & --- & --- & abstain \\
nut: none & 4 & 0 & --- & \\
nut: sign & 3 & 2 & --- & \\
restack: self-imitation & 11 & 2 & --- & RFT analogue \\
\bottomrule
\end{tabular}
\caption{Full selection-ablation grid (successes/50; one run per seed).}
\label{tab:selectionfull}
\end{table}

\noindent The spill sign arm collapses to 0/50 with exactly zero reward: all fifty rollouts execute the same three-block program with no wipe coverage---two near-duplicate rows reduce the policy to template repetition. The LCB admits nothing on spill or nut and abstains. True base-seat RFT is infeasible in our protocol; the nearest analogue---self-imitation on the 11 success programs of the $\pi_1$-seat policy---reaches 2/50 under identical row construction.

\paragraph{Teacher-effect ladder.}
\label{app:ladder}
Table~\ref{tab:ladder} details the admission-ledger grading: student self-play yields no verifiable credit; narrow local task experts (win rates 62--100\% on their own tasks) fail deterministically at the contract-format layer with a $\sim$10\% rescue ceiling; the strong API teacher earns 4 admissions on 13 events with rescue rates 19--47\%.

\begin{table}[h]
\caption{\textbf{The admission ledger as a teacher-effect instrument.}}
\label{tab:ladder}
\centering\footnotesize
\begin{tabular}{lcccl}
\toprule
Teacher & Events & Admitted & Rescue rate & Failure mode \\
\midrule
Student self-play & 33$^{+}$ & \textbf{0} & $\sim$0 & no credit at $p\!\approx\!0$ \\
Local task experts & 32 & \textbf{0} & $\sim$10\% ceiling & contract-format failures \\
API frontier & 13 & \textbf{4} & 19--47\% & --- \\
\bottomrule
\end{tabular}
{\footnotesize $^{+}$restack 11, nut 6, spill 5.}
\end{table}

\section{Transfer, pooling, and deployment}
\label{app:transfer}

\begin{table}[h]
\caption{\textbf{Cross-task transfer} (successes / 50, @120).}
\label{tab:transfer}
\centering\footnotesize
\begin{tabular}{lcccc}
\toprule
Policy $\backslash$ Task & nut & spill & lift & handover \\
\midrule
Base & 2 & 34 & 2 & --- \\
$\pi_1$ (restack student) & 22 & 24 & 6 & 0 \\
$\pi_2'$ (restack refined) & 16 & \textbf{45} & 3 & 1 \\
Lift expert & 0 & 28 & \textbf{38} & 0 \\
\bottomrule
\end{tabular}
{\footnotesize Own-task (restack): 31/40.}
\end{table}

\begin{table}[h]
\caption{\textbf{Pooled joint training fails} (successes / 50). Two separable failures: task interference and quota starvation.}
\label{tab:pooling}
\centering\footnotesize
\begin{tabular}{lccccc}
\toprule
Task & Expert & Equal (42) & Proportional (25) & Full-pool (172) & Credit-equal \\
\midrule
restack & \textbf{31} & 3 & 2 & 2 & 1 \\
spill & \textbf{50} & 31 & 0 & 0 & 32 \\
nut & \textbf{44} & 9 & 0 & 0 & 21 \\
lift & \textbf{38} & 20 & 25 & 5 & 15 \\
cubestack & 50 & 50 & 0 & 48 & 48 \\
door & 50 & 50 & 0 & 49 & 49 \\
\bottomrule
\end{tabular}
\end{table}

\paragraph{Real-robot platform.}
\label{app:realrobot}
The deployment platform is a dual-arm AgileX PiPER rig with SAM3-based open-vocabulary perception (Figure~\ref{fig:piper}); the execution stack exposes the simulation API surface unchanged, so the deployed policy code requires no hardware-specific edits.

\paragraph{Compute infrastructure.}
All simulation training, collection, and evaluation ran on an eight-GPU NVIDIA H200 server (143\,GB per GPU), one GPU per run: the 27B student is served with vLLM (LoRA adapters, runtime hot-loading), training uses FSDP with gradient checkpointing, and verification rollouts run against the same engine; teacher and diagnosis calls go to an external API model.

\begin{figure}[h]
\centering
\includegraphics[width=0.62\linewidth]{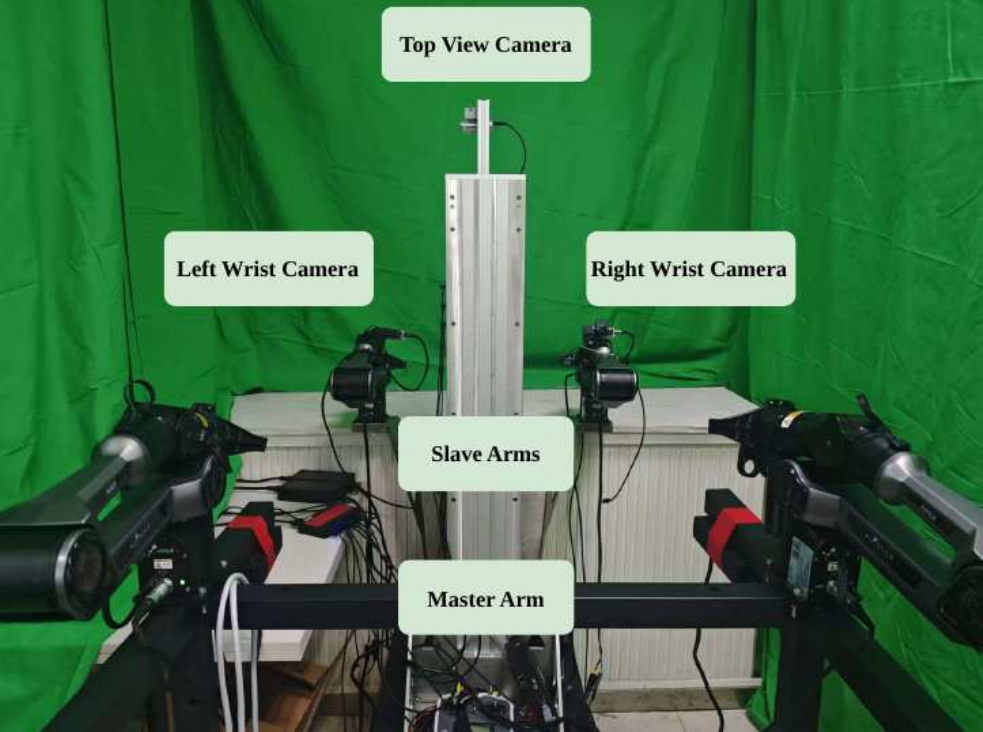}
\caption{\textbf{Real-robot platform.} The dual-arm AgileX PiPER rig used for the zero-shot deployment experiments.}
\label{fig:piper}
\end{figure}

\section{Baseline details}
\label{app:grpo}

\paragraph{GRPO configuration and fairness.} The GRPO control was tuned until it trained at all in this stack: chat-template thinking mode disabled (otherwise every response clips at the token budget and all rewards are zero), max response length 3072, thread-safety fixes for the robosuite reward and IK-server liveness monitoring, and memory-feasible two-card colocate settings for the 27B policy. KL regularization (reference model) was disabled to fit GPU memory; entropy collapse (0.30$\to$0.002) persisted across all of these fixes. We cannot rule out that a larger-memory configuration with KL regularization behaves differently; the comparison should be read as against a resource-matched GRPO, not a maximally tuned one.
The GRPO control evaluates on the same 50-seed protocol: restack 1/50, spill 0/50, lift 0/50, cubestack 50/50 (base 4/34/2/50); the nut evaluation could not be completed due to a serving-stack incompatibility and is omitted.

\end{document}